\documentclass[sigconf,natbib]{acmart}

\bibliographystyle{ACM-Reference-Format}

\usepackage[english]{babel}

\usepackage{graphicx}
\usepackage{microtype}
\usepackage{booktabs}

\usepackage{float}
\usepackage{wrapfig}
\usepackage[normalem]{ulem}

\usepackage{bm}
\usepackage{textcomp}
\usepackage{amsmath}
\usepackage{amsthm}
\usepackage{amsfonts}
\usepackage{mathtools}
\usepackage{pifont}

\usepackage{algorithm}
\usepackage{algorithmic}
\usepackage{gensymb}

\usepackage{textcomp}
\usepackage{multirow}
\usepackage{lscape}
\usepackage{subcaption}

\usepackage{mdframed}
\usepackage{hyperref}
\usepackage{textgreek}
\usepackage{tabularx, colortbl, xcolor}
\definecolor{mygray}{gray}{0.95}
\definecolor{mycyan}{HTML}{005397}
\definecolor{myred}{HTML}{E13333}
\definecolor{mymagenta}{HTML}{BF3E87}
\definecolor{mypurple}{HTML}{1B2278}

\definecolor{tearose}{HTML}{F584C5}
\definecolor{coral}{HTML}{F67088}
\definecolor{dodger_blue}{HTML}{3BA3EC}
\definecolor{domino}{HTML}{BC9F48}
\definecolor{domino}{HTML}{BC9F48}
\definecolor{domino}{HTML}{BC9F48}
\definecolor{catalina_blue}{HTML}{1C3168}
\definecolor{catalina_blue}{HTML}{1C3168}
\definecolor{catalina_blue}{HTML}{1C3168}
\definecolor{dark_scarlet}{HTML}{C63D52}
\definecolor{cerulean}{HTML}{0192A8}

\definecolor{tussock}{HTML}{C99E31}
\definecolor{p13}{HTML}{BFB5D7}
\definecolor{b14}{HTML}{BEA1A5}
\definecolor{y15}{HTML}{F0Cf61}
\definecolor{Merino}{HTML}{F3EEE3}

\newcolumntype{a}{>{\columncolor{p13}}l}


\usepackage[capitalize,noabbrev]{cleveref}
\crefname{ineq}{Inequality}{Inequalities}
\creflabelformat{ineq}{#2{\upshape(#1)}#3}

\theoremstyle{remark}

\newtheoremstyle{researchquestionstyle}{}{}{}{}{\bf}{}{.5em}{}
\theoremstyle{researchquestionstyle}

\usepackage{braket}
\newcommand{\norm}[1]{\left\lVert#1\right\rVert}

\usepackage{stmaryrd}

\DeclareMathOperator*{\argmax}{arg\,max}

\usepackage{tikz}
\usetikzlibrary{trees}
\usepackage{tikz-dependency}
\usepackage{tikz-3dplot}
\usetikzlibrary{chains,scopes}
\usetikzlibrary{arrows.meta,automata,positioning}
\usetikzlibrary{shapes.geometric, arrows}
\usepackage{pgfplots}

\pgfplotsset{
  every axis/.append style = {thick},
  tick style = {thick,black},
  %
  /tikz/normal shift/.code 2 args = {%
    \pgftransformshift{%
        \pgfpointscale{#2}{\pgfplotspointouternormalvectorofticklabelaxis{#1}}%
    }%
  },%
  shift/.style = {
    tick align        = outside,
    scaled ticks      = false,
    enlargelimits     = false,
    ticklabel shift   = {#1},
    axis lines*       = left,
    xtick style       = {normal shift={x}{#1}},
    ytick style       = {normal shift={y}{#1}},
    x axis line style = {normal shift={x}{#1}},
    y axis line style = {normal shift={y}{#1}},
  },
  shift/.default = 10pt,
  shift3d/.style = {
    shift=#1,
    ztick style       = {normal shift={z}{#1}},
    z axis line style = {normal shift={z}{#1}},
  },
  shift3d/.default = 10pt,
}

\usepgfplotslibrary{fillbetween}

\usepackage{listings}
\lstset{
  language=Python,
  stepnumber=1,
  numbersep=5pt,
  tabsize=4,
  basicstyle=\scriptsize\ttfamily,
  columns=fullflexible,
  keepspaces,
  showstringspaces=false
}

\usepackage{array}
\newcolumntype{H}{>{\setbox0=\hbox\bgroup}c<{\egroup}@{}}

\usepackage[most]{tcolorbox}
\newtcolorbox[auto counter]{summary}[1][]{title={\bfseries Summary~\thetcbcounter},enhanced,
  coltitle=black,
  colback=white,
  top=0.3in,
  attach boxed title to top left=
  {xshift=1.5em,yshift=-\tcboxedtitleheight/2},boxrule=0.5pt,  sharp corners, fonttitle=\bfseries,boxed title style={size=small,colback=white,colframe=white},#1}

\title{Get Global Guarantees: On the Probabilistic Nature of Perturbation Robustness}

\author{Wenchuan Mu}
\affiliation{
  \institution{Singapore University of Technology and Design}
  \country{Singapore}
}
\author{Kwan Hui Lim}
\affiliation{
  \institution{Singapore University of Technology and Design}
  \country{Singapore}
}

\copyrightyear{2025}
\acmYear{2025}
\setcopyright{acmlicensed}\acmConference[CIKM '25]{Proceedings of the 34th
ACM International Conference on Information and Knowledge
Management}{November 10--14, 2025}{Seoul, Republic of Korea}
\acmBooktitle{Proceedings of the 34th ACM International Conference on
Information and Knowledge Management (CIKM '25), November 10--14, 2025,
Seoul, Republic of Korea}
\acmDOI{10.1145/3746252.3761039}
\acmISBN{979-8-4007-2040-6/2025/11}

\begin{document}

\begin{abstract}

In safety-critical deep learning applications, robustness measures the ability of neural models that handle imperceptible perturbations in input data, which may lead to potential safety hazards. Existing pre-deployment robustness assessment methods typically suffer from significant trade-offs between computational cost and measurement precision, limiting their practical utility. To address these limitations, this paper conducts a comprehensive comparative analysis of existing robustness definitions and associated assessment methodologies. We propose tower robustness to evaluate robustness, which is a novel, practical metric based on hypothesis testing to quantitatively evaluate probabilistic robustness, enabling more rigorous and efficient pre-deployment assessments. Our extensive comparative evaluation illustrates the advantages and applicability of our proposed approach, thereby advancing the systematic understanding and enhancement of model robustness in safety-critical deep learning applications.

\end{abstract}

\maketitle

\section{Introduction}
\label{sec:intro}

Deep learning has transformed numerous fields, enabling breakthroughs in image recognition~\cite{he2016deep,dosovitskiy2020image}, natural language processing~\cite{zhang2020pegasus,floridi2020gpt}, and autonomous systems~\cite{fu2016using}. In safety-critical domains, such as self-driving vehicles~\cite{kurakin2018adversarial}, medical diagnosis~\cite{ragoza2017protein,wen2020adapting}, and industrial control systems~\cite{al2020ensemble}, the reliability and robustness of deep neural networks (DNNs) are paramount. In these contexts, even minor perturbations in input data can lead to severe consequences, highlighting the need for rigorous robustness evaluation prior to deployment.

Model robustness evaluation is well-studied but remains challenging. Formal verification is one established approach, but it often suffers from incomplete problem formulations~\cite{katz2017reluplex}, failing to capture the full range of real-world perturbations. In addition, verification techniques are typically computationally expensive, rendering them impractical for large-scale models~\cite{zhang2023proa}. These limitations call for evaluation methods that are both more comprehensive and computationally feasible.

To address these gaps, we explore probabilistic robustness assessment, which aims to estimate a model's failure probability under adversarial perturbations. While existing probabilistic methods frequently rely on approximations, such approximations risk overlooking critical adversarial instances~\cite{agresti1998approximate}, potentially overestimating model robustness.

We propose a new framework that employs Tower robustness with hypothesis testing to yield statistical guarantees on robustness estimates. This approach enables precise quantification of the probability of model failure, offering a more informative and actionable robustness measure. By grounding our evaluation in exact statistical methods, we aim to better capture model behaviour in real-world deployments.

We make our method available at \url{https://github.com/confidential-submission/adversarial-attacks-pytorch}, offering researchers and practitioners a user-friendly and statistically rigorous tool for robustness assessment.

We validate our approach through extensive experiments on several large-scale DNNs, comparing it against state-of-the-art baselines. Results demonstrate that our method consistently delivers more accurate and reliable robustness estimates, underscoring its value for both theoretical analysis and practical deployment in high-stakes applications.

The rest of the paper is organized as follows: Section 2 reviews related work on robustness evaluation; Section 3 outlines the theoretical basis of our metric; Section 4 presents experiments and results, and; Section 5 concludes with key findings.

\begin{figure}
\begin{subfigure}{0.32\linewidth}
\centering
\includegraphics[width=\linewidth]{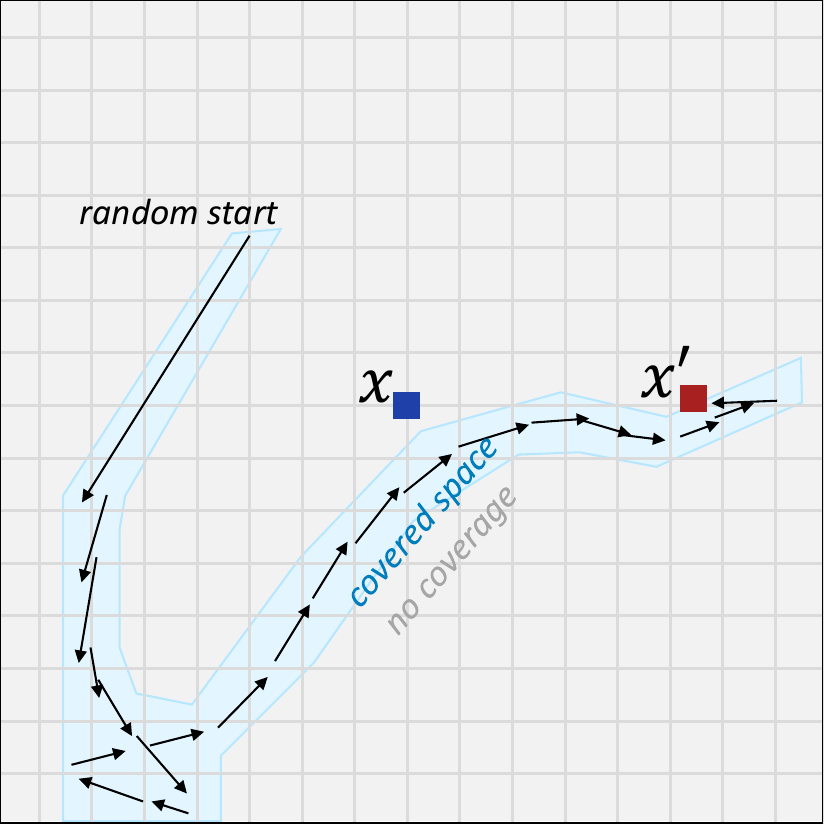}
\caption{~}
\end{subfigure}
\begin{subfigure}{0.32\linewidth}
\centering
\includegraphics[width=\linewidth]{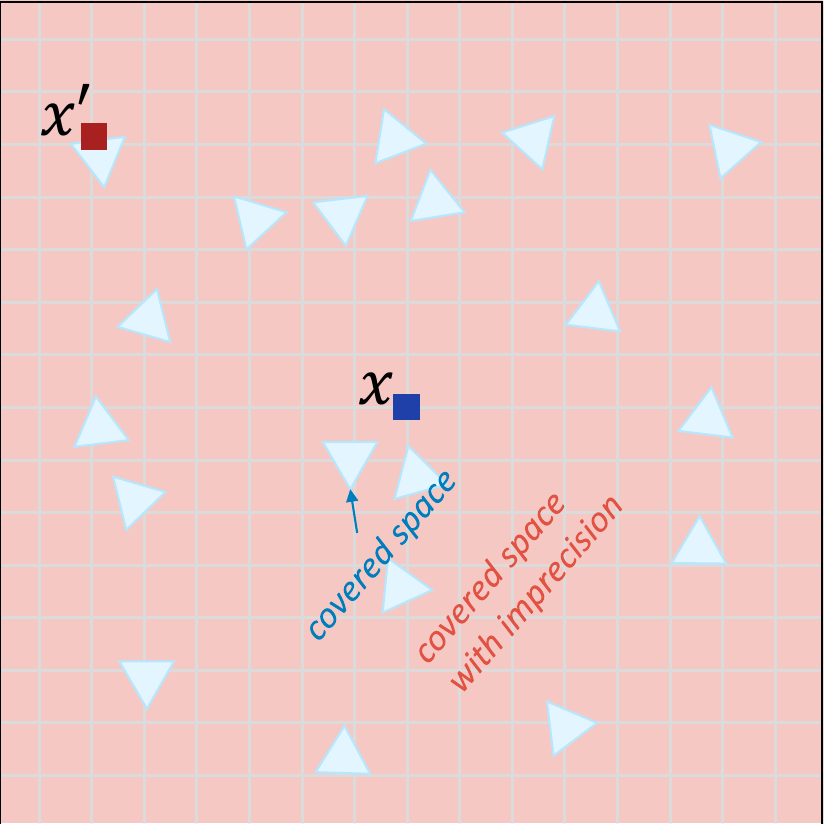}
\caption{~}
\end{subfigure}
\begin{subfigure}{0.32\linewidth}
\centering
\includegraphics[width=\linewidth]{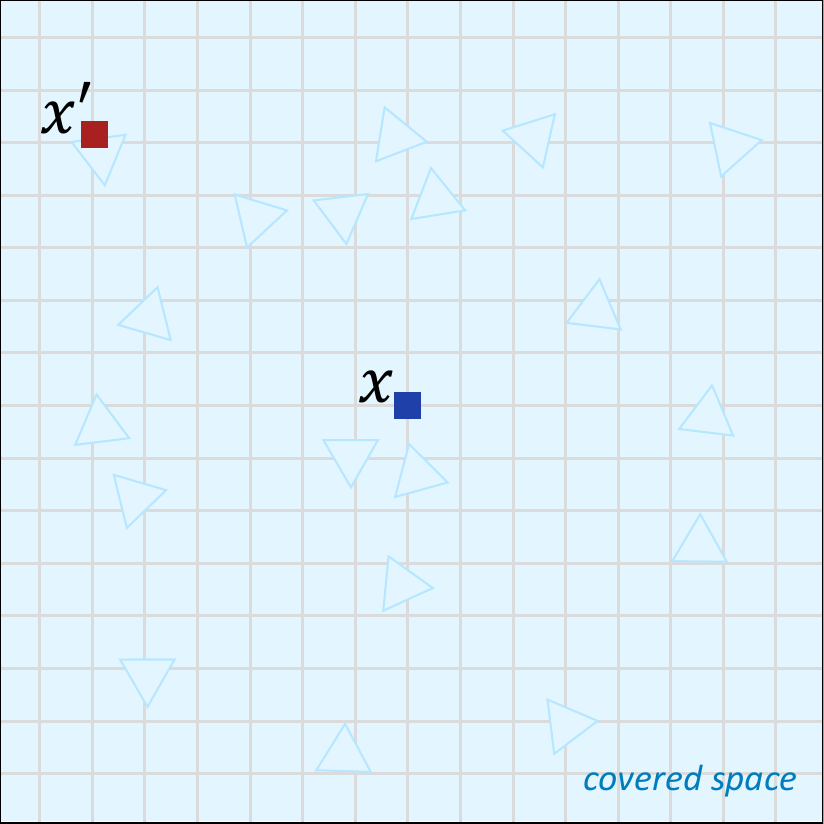}
\caption{~}
\end{subfigure}
  
  \caption{Comparison of space coverage: (a) Adversarial attacks explore a small portion in the vicinity around input sample $x$ to find adversarial example $x'$. There is no guaranteed prediction for the untouched space. (b) Existing probabilistic methods predict over the entire vicinity but lack precision. The proportion of adversarial examples is possibly higher than the predicted "guarantee". (c) The proposed method achieves comprehensive coverage and maintains precision for robustness evaluation.}
  
  \label{fig:teaser}
\end{figure}

\section{Preliminaries}

This section discusses the prerequisite knowledge. We first introduce the notations of each quantity. After that, we review the essentials of statistical learning, e.g., data sampling, objective functions, and neural networks.

\subsection{Notations}

\begin{table}[t]
    \caption{Mathematical object letter styles (e.g., using letter A) }
    \centering
    
    \begin{tabular}{l|l}
    \toprule
        A  variable or variable function               & $a$          \\
        \rowcolor{mygray}A  random variable or random variable function & $\mathbf{a}$ \\
        A set                                          & $\mathbb{A}$ \\
        \rowcolor{mygray}A distribution                                 & $A$          \\
        A function whose value is $1$ if $a$ is true, $0$ otherwise     & $\mathbf{1}_{a}$          \\
    \bottomrule
    \end{tabular}
    \label{tab:numbers}
\end{table}

We use the simple letter styles as given in \cref{tab:numbers} to distinguish different types of mathematical objects. Note that the letter $P$ is reserved for probability, and $\operatorname{E}$ for expectation. The notation (e.g., for variables) is generally insensitive to whether an object is a scalar or a vector. Subscripts or superscripts in variable names are simply part of the name, not as indices. In fact, we will never index into vector variables or sets in this manuscript. Also, lowercase Greek letters are always used for scalars.

\subsection{Data, Distribution, and Samples}

In the following, we briefly review the probability and statistics background of this work. Let $D$ denote a joint probability distribution for random variables $\mathbf{x}$ and $\mathbf{y}$, such that for some positive integer $n$, a sample from $D$ could look like
\begin{equation}
    \left\{ (x_1, y_1), (x_2, y_2), \ldots, (x_n, y_n)\right\}
\end{equation}
where $n$ is the sample size. In practice, we are usually given the sample. For example, MNIST~\cite{lecunmnist} dataset could be seen as a sample drawn from the distribution of handwritten digits\footnote{To view MNIST as an unbiased sample, we might as well say it is from a distribution of American employees' and high school students' handwritten digits.}. If we further look into the set of MNIST training images and testing images, they can be considered two samples drawn from this distribution where $n=60,000$ for the training set and $n=10,000$ for the testing set.

Let $\mathbb{X}$ be the outcome space of $\mathbf{x}$, and $\mathbb{Y}$ be the outcome space of $\mathbf{y}$. Intuitively, in the MNIST context, $\mathbb{X}$ would be the set containing all 28$\times$28 pixel greyscale images, and $\mathbb{Y}$ would be a set of integers from 0 to 9.

Often, we are interested in the expected value of some function $f(\mathbf{x}, \mathbf{y})$, denoted by $\operatorname{E}(f(\mathbf{x}, \mathbf{y}))$ and computed as follows.
\begin{equation}
    \operatorname{E}\left[f(\mathbf{x}, \mathbf{y})\right] = \int_{\mathbb{Y}} \int_{\mathbb{X}} f(x, y) P(\mathbf{x}=x, \mathbf{y}=y) ~dx ~dy
\end{equation}
In practice, if a sample is large enough, the law of large numbers ensures that the sample average converges to the true expected value, i.e., as $n\to\infty$
\begin{equation}
\label{eq:expected}
    \operatorname{E}\left[f(\mathbf{x}, \mathbf{y})\right] \approx \frac{1}{n}\sum_{i=1}^n  f(x_i, y_i).
\end{equation}

\subsection{Learning and Neural Models}

Let us denote a function $h:\mathbb{X}\to\mathbb{Y}$. Essentially, $\mathbb{X}$ is considered an input feature space, and $\mathbb{Y}$ is considered a label space, while a model function $h$ predicts a label from a given feature. A good model makes correct predictions, i.e., we want to maximise
\begin{equation}
\label{eq:correct}
    \operatorname{E}\left[1_{h(\mathbf{x}) = \mathbf{y}} \right].
\end{equation}
Its practical measurement, however, often relies on a finite set. According to approximation~(\ref{eq:expected}), we can rewrite expression~(\ref{eq:correct}) as expression~(\ref{eq:correct-sample}). Indeed, expression~(\ref{eq:correct-sample}) denotes the commonly used test accuracy metric when the set is the testing set.
\begin{equation}
\label{eq:correct-sample}
    \frac{1}{n}\sum_{i=1}^n  1_{h(x_i) = y_i}
\end{equation}

The process of actively optimising the model is referred to as learning (or training). Learning is complicated and may be done empirically through various forms~\cite{vapnik1999nature,radford2021learning,chen2020simple,deng2019arcface}. This is primarily because we only have access to a finite training set of data that theoretically does not suffice to calculate the objective function, i.e. expression~(\ref{eq:correct}) or (\ref{eq:correct-sample}). Consequently, a common practice is to define an optimisation problem (typically the objective function) that is fully measurable from the training set. For instance, the empirical risk minimisation (ERM~\cite{vapnik1999nature}) method is defined as follows, which is one of the most straightforward learning strategies.

\begin{definition}[Empirical risk minimisation]
    Define a density function $g:\mathbb{X}\times\mathbb{Y}\to[0,1]$ such that $\forall x\in\mathbb{X},~ 1 = \sum_{y\in\mathbb{Y}} g(x,y)$. Supposedly, this density reflects how likely a prediction on $x$ will be $y$, i.e., $g(x, y) \equiv P(\hat{\mathbf{y}} = y\mid \mathbf{x} = x)$, where $\hat{\mathbf{y}}$ denotes the random variable of prediction. In this way, expression~(\ref{eq:correct}) can be rewritten as $\operatorname{E}_{\mathbf{x}}\left[ \left\langle P(\mathbf{y}\mid \mathbf{x}), P(\hat{\mathbf{y}}\mid \mathbf{x})\right\rangle\right]$, where $\langle\,,\rangle$ stands for inner product. Then, instead of directly maximising this inner product, ERM minimises the Kullback-Leibler divergence (KL divergence) between these two conditional probabilities. When it comes to the training sample with size $n$, the term to be minimised is as follows.
    \begin{equation}
        -\frac{1}{n}\sum_i^n \log g(x_i, y_i)
    \end{equation}
    Obtaining $h$ from $g$ can be straightforward, e.g., one common practice is to take $h(x) \coloneqq \arg\max_y g(x, y)$.
\end{definition}

Another key aspect of learning, which is orthogonal to the choice of the objective function, is the model architecture. While the optimisation process (e.g., minimising empirical risk) is general, the way we represent the model function $h$ can vary significantly, and some representations may perform better than others in practice~\cite{he2016deep}. A common and effective approach is to use a parametrised model, such as a neural network. In a simplified view, a neural network consists of (1) a fixed structural design that defines the topology of the network (e.g., number of layers, connections), and (2) variable parameters such as numerical weights and biases. During training, the optimisation algorithm searches over the parameter space, while the structure remains unchanged.

\begin{definition}[Neural Network Models]
    If $h$ is a neural model, we can re-express it as $h(x) \coloneqq \arg\max_y g(x, w, y)$, where $g:\mathbb{X}\times\mathbb{W}\times\mathbb{Y}\to[0,1]$ and $\forall x\in\mathbb{X}, \forall w\in \mathbb{W}, ~ 1 = \sum_{y\in\mathbb{Y}} g(x,w,y)$. Set $\mathbb{W}$ is the entire search space of parameters. Generally, the (sub)gradients $\partial g/\partial w$ and $\partial g/\partial x$ are both available.
\end{definition}

\subsection{Adversarial Examples}

Maximising the probability of making correct predictions is generally a sound and intuitively reasonable objective. Yet, recent research has raised important questions about whether this objective is sufficient on its own. One concern arises from the context of adversarial examples. Inputs that have been slightly perturbed in ways often imperceptible to humans, yet cause a model's prediction to flip from correct to incorrect. This could be a security problem in practice, e.g., a subtly altered stop sign may be misread by an autonomous vehicle as a speed limit sign. 

Although it remains an open question whether the adversarial example (AE) issue can be solved inherently by optimising expression~(\ref{eq:correct}), empirical evidence shows a troubling trade-off. Some models with high test accuracy perform poorly under adversarial perturbations, while others with better adversarial example performance often suffer from lower test accuracy. Here, we describe the formal definitions of adversarial examples for subsequent analysis.

\begin{definition}[Adversarial Example]
\label{def:ae}
    For real number $p\ge 1$, $\epsilon \in [0,1]$, given any input $x$ from space $\mathbb{X}$, we have that $x'$ a (measurable) $(p,\epsilon)$-neighbour of $x$ if and only if $x'\in\mathbb{X} \land \norm{x-x'}_p \le \epsilon $. For any model $h:\mathbb{X}\to\mathbb{Y}$, for any \emph{paired} $(x, y)\in\mathbb{X}\times\mathbb{Y}$, we have that a $(p,\epsilon)$-neighbour $x'$ of $x$ is also a $(p,\epsilon)$-AE of $x$ (regarding this $h$ and this $y$) if and only if $h(x) = y ~\land  h(x')\neq y$. Additionally, given $(x, y)$, let us define a condition $c_{\text{AE}}(x, y; p, \epsilon)$ or simply $c_{\text{AE}}$ as there exists some $(p,\epsilon)$-AE of $(x,y)$. Formally,
    \begin{equation}
    \begin{aligned}
        c_{\text{AE}}(x, y; p, \epsilon, h) &\coloneqq (h(x) = y) ~\land \\
        &\exists x'.~ x'\in\mathbb{X} \land \norm{x-x'}_p \le \epsilon \land   h(x')\neq y
    \end{aligned}
    \end{equation}    
\end{definition}

\cref{def:ae} is often the most intuitive and widely accepted way to define adversarial examples, as it aligns well with our understanding of small, imperceptible perturbations causing misclassification. However, it introduces a subtle issue, i.e., what if the original input is misclassified, yet there exists a neighbour that is correctly classified? Does that imply we should treat the original input as the ``adversarial'' point from the perspective of the neighbour?

Despite this ambiguity, \cref{def:ae} is not flawed. We adopt this definition as a foundation and demonstrate different contexts of robustness are given in \cref{tab:definitions}.

\begin{table*}
    
    \caption{Various contexts of robustness. In any case, we are given a pair of input and label $(x, y)$ from $\mathbb{X}\times\mathbb{Y}$, and $h$ is the model.}
    \label{tab:definitions}
\resizebox{\textwidth}{!}{
    
    \begin{tabular}{>{\centering\arraybackslash}p{30mm}|>{\centering\arraybackslash}p{70mm}|>{\centering\arraybackslash}p{20mm}|>{\centering\arraybackslash}p{125mm}}
    \toprule
    Quantity & Event formula & Computable & Meaning\\
    \midrule
    Adversarial robustness
    & $c_\text{AR}(x, y; p, \epsilon, h) \coloneqq (h(x) = y) ~\land \lnot\exists x'.~ x'\in\mathbb{X} \land \norm{x-x'}_p \le \epsilon \land h(x')\neq y$
    & Intractable
    & $c_\text{AR}$ is true if there does not exist any adversarial example to this input-label pair, and false if there do.\\
    \midrule
    Adversarial attack failure rate
    & $c_\text{AF}(x, y; p, \epsilon, h, f_\text{attack}) \coloneqq \lnot\exists x'.~ x'\in f_\text{attack}(x, y, p, \epsilon, h)  \land \norm{x-x'}_p \le \epsilon \land h(x')\neq y$, where $f_\text{attack}(x, y, p, \epsilon, h)\subset\mathbb{X}$
    & Yes
    & $c_\text{AF}$ being false means we successfully find at least one adversarial example. $c_\text{AF}$ being true means no adversarial example has been found yet, but we have no idea whether one will be found.\\
    \midrule
    Adversarial accuracy
    & $c_\text{AA}(x, y; p, \epsilon, h, f_\text{attack}) \coloneqq (h(x) = y) ~\land \lnot\exists x'.~ x'\in f_\text{attack}(x, y, p, \epsilon, h) \land \norm{x-x'}_p \le \epsilon \land h(x')\neq y$
    & Yes
    & $c_\text{AA}$ indicates the prediction on $x$ is correct, and no adversarial example has been found yet (again, we do not know if one will be found). Otherwise, we get $\vdash \lnot c_\text{AA}$.\\
    \midrule
    Deterministic certified robust accuracy
    & To find $c_\text{DC}(x, y; p, \epsilon, h)$ such that $\vdash c_\text{DC}(x, y; p, \epsilon, h) \to c_\text{AR}(x, y; p, \epsilon, h) $.
    & Yes, but not for all $x$
    & Try to let a sufficient condition of $c_\text{AR}$ hold. Then, if $c_\text{DC}$ holds, we know that no adversarial exists around the input. But if $c_\text{DC}$ does not hold, we do not get any implication.\\
    \midrule
    Probabilistic robustness
    & $c_\text{PR}(x, y; p, \epsilon, h, \kappa) \coloneqq  P_{\text{Uniform}, \norm{\mathbf{x'} - x}_p\le \epsilon } ( h(\mathbf{x'})\neq y) \le \kappa$, for $0<\kappa<1/2$
    & Unverifiable from samples
    & $c_\text{PR}$ hold if and only if the proportion of adversarial examples among neighbours is capped by a real number $\kappa$. Predicting the original $x$ correctly is not emphasised.\\
    \midrule
    Probabilistic robust accuracy
    & $c_\text{PRA}(x, y; p, \epsilon, h, \kappa, \alpha) \coloneqq  P(P_{\text{Uniform}, \norm{\mathbf{x'} - x}_p\le \epsilon } ( h(\mathbf{x'})\neq y) > \kappa) \le \alpha$, for $0<\kappa<1/2, 0< \alpha < 1$.
    & Yes
    & When $c_\text{PRA}$ holds, the probability of $c_\text{PR}\neq 1$ is small, such that it is capped at a real value $\alpha$. When $c_\text{PRA}$ does not hold, it just means that we are not informed by this inequality.  \\

    \bottomrule    
  \end{tabular}
}
\end{table*}

\section{Tower Robustness}

In this section, we present Tower robustness, which is a sound probabilistic quantity of the perturbation robustness concept. First, we describe the essence of robustness and revisit various observation strategies for robustness to see why the concept of robustness has not been captured well enough. After that, we propose Tower robustness based on the Tower Law and formally show why this is a reasonable measure of robustness.

\subsection{Robustness, What Are We Actually Observing?}

While avoiding adversarial examples is commonly considered the core of robustness\footnote{There are other loosely related concepts in the name of robustness, e.g., generalisation robustness or compression robustness, which is covered in the appendix}, the quantitative definition of robustness remains an open problem. Empirically, there exist a few metrics to measure robustness as demonstrated in \cref{tab:definitions}, e.g., adversarial attack failure rate, but there has never been an agreement on which one truly represents robustness.

In the following, we aim to set this issue in a theoretical framework. We start from this often overlooked question, i.e., what exactly do we aim to maximise when we ask for greater robustness?

Note that in this study, we do not focus on specific optimisation strategies, e.g., adversarial training. Instead, we focus on the objective functions on the evaluation side, e.g., test accuracy for classification. In other words, we would like to measure the robustness of models and would not concern ourselves with how these models have been built. Let us now walk through how each robustness metric is currently defined de facto in practice.

\subsubsection*{Adversarial robustness and its bounds}

The overall adversarial robustness of a model can be expressed as $\mathbb{E}[1_{c_\text{AR}}]$, or equivalently, $P(c_\text{AR})$. Here, $P(c_\text{AR})$ represents the probability that no adversarial examples exist in the neighbourhood of a randomly chosen input. Computing this probability directly is intractable, because even simple high-dimensional data like an MNIST~\cite{lecunmnist} image can have over $10^{200}$ neighbours.

A practical approach is to adversarially attack each testing case by heuristically searching for adversarial examples within a limited number of steps, rather than exhaustively checking all neighbours. Under this method, if no adversarial example is found within the search steps (i.e., $\lnot c_\text{AA}$), we can infer $\lnot c_\text{AR}$. However, if the attack times out (i.e., $c_\text{AA}$ holds), we gain no information about $c_\text{AR}$. In other words, $P(c_\text{AA})$ serves as an upper bound for $P(c_\text{AR})$. The issue is that $P(c_\text{AA})$ depends on the specific attack method used, which may not be fully independent of the model itself, especially in cases like adversarial training where the attack procedure influences the model. As a result, ranking models based on $P(c_\text{AA})$ might not reflect their ranking in terms of the adversarial robustness, $P(c_\text{AR})$.

An alternative strategy is to define a condition $c_\text{DC}$ such that $c_\text{DC} \to c_\text{AR}$, providing a lower bound for $P(c_\text{AR})$. Unlike $c_\text{AA}$, the condition $c_\text{DC}$ does not rely on any attack process, but it is known to be heavily influenced by the model choices. When $c_\text{DC}$ does not hold, it remains possible that $c_\text{AR}$ still holds, but we have no way to confirm this.

It might be imaginably possible that we can simultaneously lift both the lower and upper bounds on $P(c_\text{AR})$ to effectively improve robustness. In practice, however, a substantial gap often remains between these bounds, and it is rare to find models that make meaningful progress on both fronts at once. As a result, these quantities offer limited insight into true robustness.

\subsubsection*{Probabilistic robustness}

Probabilistic robustness was originally proposed to address the problem that high-accuracy models usually lack robustness guarantees~\cite{robey2022probabilistically,zhang2023proa}. To this end, the focused event is modified from $c_\text{AR}$ to $c_\text{PR}$, i.e, from whether there are no adversarial examples to whether there are few enough adversarial examples. However, this also makes the definition less interpretable, e.g., it would read like maximising "the probability of misclassification upon input perturbation is less than a tolerance level"~\cite{robey2022probabilistically}. Furthermore, the choice of this tolerance level ($\kappa$ as in \cref{tab:definitions}) introduces a new parameter that not only requires justification but aggravates the complicatedness of robustness evaluation.

Worse, this $P(c_\text{PR})$ form itself is not verifiable through a finite number of samples, and the practical solution to provide such a probabilistic robustness guarantee must read like maximising "the probability of that the probability that the probability of misclassification upon input perturbation is less than a tolerance level is greater than or equal to a significance level"~\cite{zhang2023proa}. While this is a reasonable choice out of practicality, it introduces yet another new parameter ($\alpha$ as in \cref{tab:definitions}), which makes the evaluation even more confusing. These quantities risk becoming little more than shadows on the wall, approximations that hint at the underlying reality but fail to fully capture it. We believe that seeking a suitable and clean definition of robustness is worthwhile.

\subsubsection*{Robustness might not be just about rates}
As demonstrated in \cref{tab:definitions}, nearly every event involves a complex combination of multiple factors. Although it is straightforward to compute an average across all test cases, this does not necessarily mean that the events are well-defined in a probabilistic sense.

A potential blind spot is that robustness might not have to be represented as a rate (like all in \cref{tab:definitions}). Intuitively, the measurable quantity of robustness could be understood as an estimator of a deeper property. Let us take a look at the accuracy counterpart, i.e., accuracy is not just a rate, but an unbiased estimator of a probability, with clear statistical properties like expectation, variance, and concentration bounds (e.g., Hoeffding’s inequality). In the next step, we will find this property for robustness and present our definition.

\subsection{The Probability of Misprediction Given Any Neighbourhood}

In the following, we introduce Tower robustness, a property that underpins measurable notions of robustness in a meaningful and structured way. First, we state its formal definition. Then, using the Tower Law, we show how this property encapsulates robustness and connects to measurable quantities. Finally, we offer an interpretation from a convolutional perspective to aid in understanding its implications.

\begin{definition}[Tower Robustness]
\label{def:tr}
    For Tower robustness, we ask, given a random neighbourhood, what is the probability of making correct predictions? Formally, we are given a joint distribution of $\mathbf{x}, \mathbf{y}$, where $\mathbf{x}$ has support $\mathbb{X}$ and $\mathbf{y}$ has support $\mathbb{Y}$. For any model $h:\mathbb{X}\to\mathbb{Y}$, its $(p,\epsilon)$-Tower robustness is defined as 
    \begin{equation}
    \label{eq:tr-prob}
        P_{\norm{\mathbf{x'}-\mathbf{x}}_p \le \epsilon} ~\left(h(\mathbf{x'})=\mathbf{y}\right),
    \end{equation}
    or equivalently
    \begin{equation}
    \label{eq:tr-exp}
        \operatorname{E}_{\norm{\mathbf{x'}-\mathbf{x}}_p \le \epsilon}~\left[1_{h(\mathbf{x'}) = \mathbf{y}} \right].
    \end{equation}
\end{definition}

Intuitively, the Tower robustness (TR) of a model under a given distribution and specified $(p,\epsilon)$ would be a unique scalar. A higher TR suggests a more robust model at the given setting. Note that TR itself, as defined in \cref{def:tr}, is computable using finite samples, but may be estimated with a variance. Thus, we propose computable bounds of TR in \cref{sec:bounds}.

\subsubsection{Derivation of Tower robustness and link to other robustness quantities}

To understand the rationale of Tower robustness, we start from the sense of deterministic certified robust accuracy (DC accuracy). According to \cref{tab:definitions}, DC accuracy can be written as $\sum_i^n 1_{c_{DC, i}}/n$, representing the proportion of test cases for which a deterministic certification is achieved. While DC accuracy is typically viewed as unrelated to probabilistic behaviour unless explicitly noted, we will demonstrate that it is, in fact, fundamentally probabilistic in nature.

\paragraph{The probabilistic nature in deterministic robustness}

The term probabilistic robustness, when used in a deterministic context like a recent systematisation of knowledge~\cite{li2023sok}, is often ambiguous. This ambiguity arises from that there are two ways of choosing random variables. The first way uses $\mathbf{x}, \mathbf{y}$ as random variable. That is, for each given pair of data, the condition $c_{\text{DC}}$ is a (deterministic) function value, and we aim to get 
\begin{equation}
\label{eq:probdeter}
    P\left(c_{\text{DC}}(\mathbf{x}, \mathbf{y}, p, \epsilon, h)\right).
\end{equation}
This is often overlooked, because we often resort to its unbiased maximum likelihood estimation (MLE), i.e., DC accuracy. The second way uses $\mathbf{c}_{\text{DC}}$ as random variable. This particularly works when given $x, y$, we are not able to determine $c_{\text{DC}}$ and have to guess whether it is true, e.g., when robustness verification is not complete but already acquires some knowledge $\mathbf{c}_{\text{known}}$ about this testing case. This works within the neighbourhood of individual given $x, y$, rather than across samples. Formally, this could be expressed as $P(\mathbf{c}_{\text{DC}}\mid \mathbf{c}_{known}(x, y, p, \epsilon, h))$ .

In this study, we leverage the first way of choosing random variables, i.e., expression~(\ref{eq:probdeter}), to help understanding Tower robustness. Essentially, expression~(\ref{eq:probdeter}) represents the probability of achieving deterministic certification for a random input-label pair from the distribution. Furthermore, expression~(\ref{eq:probdeter}) is a sound lower bound of $P\left(c_{\text{AR}}(\mathbf{x}, \mathbf{y}, p, \epsilon, h)\right)$, i.e., adversarial robustness.

For a long time, $P\left(c_{\text{AR}}(\mathbf{x}, \mathbf{y}, p, \epsilon, h)\right)$ has been seen well capturing robustness. However, this measure comes with notable limitations. First, its value is theoretically proven to be bounded within a low range, primarily because excluding all adversarial examples is an inherently strict requirement. As a result, relying on $P\left(c_{\text{AR}}(\mathbf{x}, \mathbf{y}, p, \epsilon, h)\right)$ to quantify robustness can be misleading. Intuitively, it is akin to signing up for an extremely difficult chess match and then scoring poorly. Merely attempting a hard challenge does not demonstrate mastery. More importantly, when encountering a new random input-label pair, one cannot determine in advance whether adversarial examples are present. This uncertainty means that robustness, in this context, remains probabilistic. Worse yet, due to the typically low probability value, such a probabilistic estimate provides little practical confidence. To intuitively understand this, let us look at the following game example.

\begin{summary}[title=Example: Expected Payout from a Game,]
\label{ex:game}
\vspace{-5mm}

Imagine you play a carnival game where you are randomly assigned to either Machine A (40\% chance) or Machine B (60\% chance). Machine A pays \$10 with a 50\% chance and \$0 otherwise, while Machine B pays \$5 with an 80\% chance and \$0 otherwise. To find the expected payout, we calculate the expected value for each machine: for Machine A, it is $0.5\times10+0.5\times0=5$, and for Machine B it is $0.8\times5+0.2\times0=4$. Then, using the Tower Law, the overall expected payout is $0.4\times5+0.6\times4=2+2.4=$\$4.40.
\end{summary}

In an adversarial robustness scenario, consider a system like ``Machine A'' that, with a 50\% chance, shows zero adversarial examples, and with the remaining 50\%, shows an unknown proportion of adversarial examples, potentially many, say less than T\%. The expected adversarial example payout is then $0.5\times0+0.5\times\text{T}=$ 5T\textperthousand. Like in the carnival game, it is the 5T\textperthousand\, that matters practically. With this idea, let us review the Tower Law formally.

\begin{definition}[Tower Law]
    Let $\mathbf{b}$ be an integrable random variable, i.e., $\operatorname{E}[|\mathbf{b}|] < \infty$. As $\operatorname{E}(\mathbf{b}\mid \mathbf{a})$ is a random variable and a function of $\mathbf{a}$, we can take its expected value, and it can be shown that
    \begin{equation}
        \operatorname{E}[\operatorname{E}[\mathbf{b} \mid \mathbf{a}]] = \operatorname{E}[\mathbf{b}]
    \end{equation}
\end{definition}

\begin{theorem}
\label{thm:tower}
    We are given a joint distribution of $\mathbf{x}, \mathbf{y}$, where $\mathbf{x}$ has support $\mathbb{X}$ and $\mathbf{y}$ has support $\mathbb{Y}$. For any model $h:\mathbb{X}\to\mathbb{Y}$, its $(p,\epsilon)$-Tower robustness can be equivalently written as
    \begin{equation}
        \operatorname{E}\left[P\left(h(\mathbf{x'}) = \mathbf{y} ~\left|~ \norm{\mathbf{x'}-\mathbf{x}}_p \le \epsilon\right.\right)\right]
    \end{equation}
\end{theorem}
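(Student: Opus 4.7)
The plan is to apply the Tower Law verbatim, choosing the integrand to be the indicator $1_{h(\mathbf{x'})=\mathbf{y}}$ and the conditioning variable to be the input-label pair $(\mathbf{x},\mathbf{y})$. The natural starting point is the expectation form of Tower robustness given in expression~(\ref{eq:tr-exp}), which already writes the quantity as a single expectation over the joint distribution of $\mathbf{x}$, $\mathbf{y}$, and the neighbourhood-constrained $\mathbf{x'}$.

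First I would make the underlying probability space explicit: the joint law of $(\mathbf{x},\mathbf{y})$ together with $\mathbf{x'}$ drawn from the (uniform) distribution on $\{x':\norm{x'-\mathbf{x}}_p \le \epsilon\}$ conditional on $\mathbf{x}$. Since $h$ and $\mathbf{y}$ take values in a discrete label space, the indicator $1_{h(\mathbf{x'})=\mathbf{y}}$ is bounded (hence integrable), so the Tower Law applies with $\mathbf{b}\coloneqq 1_{h(\mathbf{x'})=\mathbf{y}}$ and $\mathbf{a}\coloneqq(\mathbf{x},\mathbf{y})$.

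Next I would evaluate the inner conditional expectation. Conditioning on $(\mathbf{x},\mathbf{y})$, the only remaining randomness is in $\mathbf{x'}$, and its conditional distribution is by construction the neighbourhood distribution from \cref{def:tr}; hence the elementary fact that the expectation of an indicator equals the probability of the event gives $\operatorname{E}[\,1_{h(\mathbf{x'})=\mathbf{y}}\mid \mathbf{x},\mathbf{y}\,] = P(h(\mathbf{x'})=\mathbf{y}\mid \norm{\mathbf{x'}-\mathbf{x}}_p\le \epsilon)$. Taking the outer expectation over $(\mathbf{x},\mathbf{y})$ and invoking the Tower Law then identifies this iterated expectation with the unconditional $\operatorname{E}[1_{h(\mathbf{x'})=\mathbf{y}}]$, which is exactly expression~(\ref{eq:tr-exp}) and therefore also~(\ref{eq:tr-prob}).

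The main obstacle is pedagogical rather than technical: one must convince the reader that the notation $P_{\norm{\mathbf{x'}-\mathbf{x}}_p\le \epsilon}$ in \cref{def:tr} really denotes a joint probability in which $(\mathbf{x},\mathbf{y})$ varies according to $D$ while $\mathbf{x'}$ is drawn from the $\epsilon$-neighbourhood determined by $\mathbf{x}$, so that conditioning on $(\mathbf{x},\mathbf{y})$ correctly isolates the inner conditional probability. Once this measurability and conditioning structure is spelt out, the theorem reduces to a one-line invocation of the Tower Law.
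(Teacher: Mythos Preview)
Your proposal is correct and follows essentially the same route as the paper: start from expression~(\ref{eq:tr-exp}), note that the indicator is bounded and hence integrable, apply the Tower Law with the inner conditioning on $(\mathbf{x},\mathbf{y})$, and then rewrite the inner conditional expectation of the indicator as the conditional probability. If anything, you are slightly more explicit than the paper about the joint probability space and the choice of conditioning variable, but the argument is the same.
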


\begin{proof}
    We apply the Tower Law to expression~(\ref{eq:tr-exp}). First, we can see that the premise holds, i.e., $1_{(*)}$ returns either 0 or 1, which makes it an integrable function. Then, we confirm that for random variables $\mathbf{x'}$ and $\mathbf{y}$, $\norm{\mathbf{x'}-\mathbf{x}}_p \le \epsilon$ and $1_{h(\mathbf{x'}) = \mathbf{y}}$ are indeed a function of $(\mathbf{x}, \mathbf{y})$. Then, we get
    \begin{equation}
        \operatorname{E}\left[\operatorname{E}\left[1_{h(\mathbf{x'}) = \mathbf{y}} ~\left|~ \norm{\mathbf{x'}-\mathbf{x}}_p \le \epsilon\right.\right]\right].
    \end{equation}
    Since for any event $\mathbf{c}$, we have $P(\mathbf{c})=E[1_{\mathbf{c}}]$, we get our proof.
\end{proof}

Intuitively, \cref{thm:tower} explains the Tower robustness, i.e., we first randomly sample a neighbourhood (around an input-label pair) from the distribution, then we randomly pick a input-label pair from this neighbourhood, we ask, what will be the probability that we make a correct prediction on this picked pair? Next, we show the connection between the Tower robustness and $P\left(c_{\text{DC}}(\mathbf{x}, \mathbf{y}, p, \epsilon, h)\right)$.

\begin{corollary}
    DC accuracy is an unbiased estimator of a lower bound of Tower robustness. Formally, 
    \begin{equation}
        P\left(c_{\text{DC}}(\mathbf{x}, \mathbf{y}, p, \epsilon, h)\right) \le \operatorname{E}\left[P\left(h(\mathbf{x'}) = \mathbf{y} ~\left|~ \norm{\mathbf{x'}-\mathbf{x}}_p \le \epsilon\right.\right)\right]
    \end{equation}
\end{corollary}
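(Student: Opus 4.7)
The plan is to decompose the claim into two independent parts: (i) DC accuracy is an unbiased estimator of $P(c_{\text{DC}}(\mathbf{x},\mathbf{y},p,\epsilon,h))$, and (ii) this probability is bounded above by Tower robustness. Part (i) is essentially immediate: DC accuracy, as written in \cref{tab:definitions}, is the sample mean $\frac{1}{n}\sum_i 1_{c_{\text{DC},i}}$ of i.i.d.\ Bernoulli indicators, so by linearity of expectation its expected value equals $P(c_{\text{DC}})$. The substantive content lies in part (ii).

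For part (ii), I would chain two inequalities. First, by the very construction of deterministic certification recorded in \cref{tab:definitions}, the implication $c_{\text{DC}} \to c_{\text{AR}}$ holds pointwise, so monotonicity of probability gives $P(c_{\text{DC}}) \le P(c_{\text{AR}})$. Second, I would establish
\begin{equation}
P(c_{\text{AR}}(\mathbf{x}, \mathbf{y}, p, \epsilon, h)) \le \operatorname{E}\left[P\left(h(\mathbf{x'}) = \mathbf{y} ~\left|~ \norm{\mathbf{x'}-\mathbf{x}}_p \le \epsilon\right.\right)\right],
\end{equation}
after which \cref{thm:tower} identifies the right-hand side with Tower robustness and completes the proof.

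To prove this second inequality, I would condition the outer expectation on the event $c_{\text{AR}}$, viewed as a measurable function of $(\mathbf{x}, \mathbf{y})$. When $c_{\text{AR}}(x,y)$ holds, \cref{def:ae} tells us that every $x'$ in the $\epsilon$-ball around $x$ satisfies $h(x') = y$, so the inner conditional probability is identically $1$. When $c_{\text{AR}}(x,y)$ does not hold, the inner conditional probability is at least $0$. Taking expectations with these two cases weighted by $P(c_{\text{AR}})$ and $1 - P(c_{\text{AR}})$ respectively, I obtain a lower bound of $P(c_{\text{AR}}) \cdot 1 + (1-P(c_{\text{AR}})) \cdot 0 = P(c_{\text{AR}})$, which is exactly what I need.

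The argument is short and the only real obstacle is notational bookkeeping: one must keep clear that the outer expectation ranges over $(\mathbf{x}, \mathbf{y})$ drawn from the data distribution while the inner probability ranges over $\mathbf{x'}$ drawn from the perturbation distribution conditional on $\norm{\mathbf{x'}-\mathbf{x}}_p \le \epsilon$, and that $c_{\text{AR}}$ and $c_{\text{DC}}$ are to be treated as (measurable) deterministic functions of $(\mathbf{x},\mathbf{y})$ so that the probabilities above are well-defined. Once this is sorted out, the chain $P(c_{\text{DC}}) \le P(c_{\text{AR}}) \le \text{Tower robustness}$ is routine.
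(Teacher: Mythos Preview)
Your proof is correct, and the core idea matches the paper's; the only difference is organizational. You chain through $c_{\text{AR}}$ explicitly, first using the implication $c_{\text{DC}} \to c_{\text{AR}}$ to get $P(c_{\text{DC}}) \le P(c_{\text{AR}})$, and then conditioning on $c_{\text{AR}}$ to get $P(c_{\text{AR}}) \le$ Tower robustness. The paper instead compares $1_{c_{\text{DC}}(x,y)}$ directly against the inner probability $P(h(\mathbf{x'})=y \mid \norm{\mathbf{x'}-x}_p \le \epsilon)$ pointwise, arguing by contrapositive that if the latter is strictly less than $1$ then there is a positive measure of mispredicted neighbours, hence $c_{\text{DC}}$ must be false. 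That last step (``adversarial examples exist $\Rightarrow c_{\text{DC}}$ false'') is exactly your contrapositive $\lnot c_{\text{AR}} \to \lnot c_{\text{DC}}$ in disguise, so the two arguments are logically equivalent; your decomposition has the small bonus of isolating the intermediate bound $P(c_{\text{AR}}) \le$ Tower robustness as a standalone fact.
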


\begin{proof}
    We have known that $\sum_i^n 1_{c_{DC, i}}/n$ is the sample mean of $P\left(c_{\text{DC}}(\mathbf{x}, \mathbf{y}, p, \epsilon, h)\right)$, such that our focus is to show the inequality above. It is sufficient if we can prove
    \begin{equation}
        \forall x, y, \lnot \left(1_{c_{\text{DC}}({x}, {y}, p, \epsilon, h)} > P\left(h(\mathbf{x'}) = {y} ~\left|~ \norm{\mathbf{x'}-{x}}_p \le \epsilon\right.\right)\right).
    \end{equation}
    Since $1_{(*)}$ returns only 0 or 1, and probabilities are always non-negative, we need only consider the case where the probability is strictly less than 1. In this case, there must be a positive measure of adversarial examples, which implies that $c_{\text{DC}}$ is false. This completes the proof.
\end{proof}

Intuitively, when trying to maximise the Tower robustness, we can still use DC accuracy as a sound lower bound to provide guarantee. Yet, neither probabilistic robustness $\operatorname{E}[c_{\text{PR}}(\mathbf{x}, \mathbf{y}, p, \epsilon, h, \kappa)]$ nor probabilistic robust accuracy $\operatorname{E}[c_{\text{PRA}}(\mathbf{x}, \mathbf{y}, p, \epsilon, h, \kappa, \alpha)]$ are naturally lower bounds of Tower robustness, i.e., they may overestimate.

\subsubsection{Tower robustness from a convolutional perspective}

We now draw a link from Tower robustness to commonly adopted correctness, i.e., expression~(\ref{eq:correct}). The link between the robustness of a given distribution and the correctness of a convolved distribution has been studied recently. Here, we formally extend this perspective to Tower robustness.

As we are sampling $\mathbf{x},\mathbf{y}$ (to get its neighbourhood) from the given distribution and then sample within the neighbourhood, it could be understood as adding two random variables. The first random variable is $\mathbf{x}$, representing the input from the joint distribution. We denote the second random variable as $\mathbf{t}\in\mathbb{X}$ and $\norm{\mathbf{t}}_p\le \epsilon$. Unsurprisingly, we can re-express $\mathbf{x'}\coloneqq \mathbf{x}+\mathbf{t}$. Theoretically, sampling of $\mathbf{x}$ and sampling of $\mathbf{t}$ are independent. Therefore, the probability density function (PDF) of $\mathbf{x'}$ can be expressed as
\begin{equation}
    p_{\mathbf{x'}}(x') = \int_{\mathbb{X}} p_{\mathbf{x}}(x) p_{\mathbf{t}}(t-x) dx.
\end{equation}
Considering the joint distribution $D$ of $\mathbf{x},\mathbf{y}$, and that $\mathbf{t}$ is independent of $\mathbf{y}$ as well, we can also express the conditional distributions in convolution as follows.
\begin{equation}
    p_{\mathbf{x'},\mathbf{y}}(x', y) = \int_{\mathbb{X}} p_{\mathbf{x},\mathbf{y}}(x, y) p_{\mathbf{t}}(t-x) dx
\end{equation}
That is, $(\mathbf{x'},\mathbf{y})\sim D * p_{\mathbf{t}}$. In this way, we can rewrite Tower robustness from expression~(\ref{eq:tr-exp}) to the following form.
\begin{equation}
\label{eq:tr-conv}
    \operatorname{E}_{\mathbf{x}, \mathbf{y}\sim D * p_{\mathbf{t}}}[h(\mathbf{x})=\mathbf{y}]
\end{equation}
Note that \emph{only} for expression~(\ref{eq:tr-conv}) we use $\mathbf{x}$ instead of $\mathbf{x'}$ to denote the perturbed input because we assume the distribution has been changed so we no longer need to add perturbations again. As seen expression~(\ref{eq:tr-conv}) is in a highly consistent form with expression~(\ref{eq:correct}). In other words, maximising robustness is indeed maximising some special kind of correctness, and this depends on how much the user/researcher would like to ``melt'' the original given distribution.

\section{Global Bounds (Lower and Upper) on Tower Robustness}
\label{sec:bounds}

Tower robustness can be estimated from finite samples without bias, but we still need to control the variance of the estimators. To establish a stricter formal guarantee in practice, we introduce a lower bound of Tower robustness and propose to find its unbiased estimator. In the following, we present our method in a bottom-up style. First, we describe how to test each individual input-label pair from a given distribution. Then, we show how we can combine the test results to provide guarantees.

\subsection{Binomial Hypothesis Test With Exact Solution}

In this step, we are given a pair of known $x, y$, and our task is to know whether the proportion of mispredictions in this $(p,\epsilon)$-neighbourhood is less than or equal to a real value $\kappa$ where $0<\kappa<1/2$. Formally, we need to check if the following condition holds.
\begin{equation}
\label{eq:pr-expand}
    P\left(h(\mathbf{x'})\neq y~\left|~\norm{\mathbf{x}-x}_p \le\epsilon\right.\right)\le\kappa
\end{equation}

Observe that the inequality above is exactly $c_\text{PR}(x, y; p, \epsilon, h, \kappa)$, the event formula for probabilistic robustness~\cite{robey2022probabilistically}. There are a few potential solutions to this task, where the shared strategy is to treat $h(\mathbf{x'})\neq y$ as a random variable with a Bernoulli distribution.

From there, \citet{robey2022probabilistically} use the maximum likelihood estimator from a random sample, which in the Bernoulli case is the sample mean (with sample size 100). This method is fast, but prone to overfitting and high variance, especially with small samples or when the true probability is near 0 or 1. When estimates approach $\kappa$, distinguishing their relation to $\kappa$ becomes statistically unreliable, necessitating more precise estimation to ensure significance.

\paragraph{Hypothesis testing}
We can use hypothesis testing to better estimate $c_{PR}$. To formulate a hypothesis testing problem, we first let $p_\intercal$ denote the true parameter p in our Bernoulli distribution. As such, we can state the null hypothesis
\begin{equation}
    H_0: p_\intercal > \kappa
\end{equation}
and alternative hypothesis
\begin{equation}
    H_1: p_\intercal \le \kappa.
\end{equation}
Then, we statistically determine whether there is enough evidence to reject the null hypothesis in favour of the alternative hypothesis.

For example, suppose we want to test the following one-sided hypothesis: $H_0: p_\intercal > 0.01$, and $H_1: p_\intercal \le 0.01$. Assume that we take a sample of 30 binary outcomes and observe 2 successes (i.e., two misprediction cases out of 30). Then, we can determine if this 2 out of 30 is significant.

While hypothesis testing helps preserve statistical significance, not all tests are well-suited for this problem. For example, the Agresti-Coull confidence interval used by \citet{zhang2023proa} approximates binomial test, and may not perform well when the sample size is very small or the true proportion $p$ is close to zero or one, leading to incorrect inferences~\cite{agresti1998approximate}.

\paragraph{Exact Binomial Test}

We compute the p-value as the cumulative probability as follows, under the binomial distribution with $p_\intercal=\kappa$.

\begin{equation}
\text{p-value} = P( \mathbf{k} > k\mid \mathbf{k}\sim \operatorname{Bin}(n, \kappa) = \sum_{i=0}^{k} \binom{n}{i} \kappa^i (1-\kappa)^{n-i}
\end{equation}
Here, $n$ represents the sample size, and $k$ denotes the number of successes. Then, if the obtained p-value exceeds a significance level $\alpha$, e.g., 0.05, we fail to reject $H_0$. If the obtained p-value is less than or equal to the significance level $\alpha$, we can reject $H_0$ and adopt $H_1$.

\paragraph{Compare with Agresti–Coull Approximation}

The difference between a binomial test (with exact solution) and its Agresti–Coull approximation lies in the way of calculating the cumulative density function of a binomial distribution. Particularly, the Agresti–Coull approximation estimates $p_\intercal$ using
\begin{equation}
    \tilde {p} =\frac {k + z_\alpha^2/2}{n + z_\alpha^2}
\end{equation}
where $k$ denotes the number of successes, and  $z_{\alpha }=\operatorname {\Phi ^{-1}} \!\!\left(\ 1-{\tfrac {\!\ \alpha \!\ }{2}}\ \right)$ is the quantile of a standard normal distribution, e.g., for $\alpha =0.05$ at one side, $z_{.05}=1.645$. A lower bound of the confidence interval is then calculated as
\begin{equation}
    \tilde{p} - z_\alpha\sqrt{\frac{\tilde{p}(1-\tilde{p})}{n + z_\alpha^2}}.
\end{equation}
If and only if this lower bound is greater than $\kappa$, then we can reject $H_0$ and adopt $H_1$.

In the example above, where $n=30, k = 2$, the exact binomial test gives a p-value of 0.9967, which fails to reject. In contrast, the Agresti–Coull confidence interval has a lower bound 
0.0153, which exceeds 0.01 and thus leads to rejection of $H_0$. This discrepancy illustrates that the Agresti–Coull approximation can overstate evidence against the null in small samples and when the $p_\intercal$ is near 0 or 1, while the exact binomial test remains conservative.

\paragraph{Both left- and right-tailed tests}

So far, we have been working on a left-tail hypothesis test, i.e., the null hypothesis is $p_\intercal$ being greater than some value. Intuitively, a left-tail test that rejects the null hypothesis suggests the probability of event $\mathbf{a_2}$ condition on event $\lnot\mathbf{a_1}$ is less than $\alpha$, where
\begin{itemize}
    \item Event $\mathbf{a_1}$: $c_\text{PR}(x, y; p, \epsilon, h, \kappa)$ is \emph{indeed} true,
    \item Event $\mathbf{a_2}$: Our evidence says that $c_\text{PR}(x, y; p, \epsilon, h, \kappa)$ would be true.
\end{itemize}
Intuitively, the significance level $\alpha$ also represents the failure rate of our conclusion.

Similar to a left-tail test, we can meanwhile plan a right-tail test. When a right-tail test rejects the null hypothesis, it suggests, the probability of event $\lnot\mathbf{a_2}$ conditioned on event $\mathbf{a_1}$ is less than $\alpha$.

For every input-label pair $(x,y)$, the hypothesis test will reject either side of the tail if there are sufficient samples. The estimation of sample number is well-studied~\cite{wald1992sequential,pocock1977group}, e.g., we could use the following formula
\begin{equation}
    n = \left( \frac{z_{1-\alpha} \cdot \sqrt{\kappa (1 - \kappa)} + z_{\alpha} \cdot \sqrt{\hat{p}_\intercal (1 - \hat{p}_\intercal)} }{\kappa - \hat{p}_\intercal} \right)^2
\end{equation}
Where $\hat{p}_\intercal$ is an empirically estimated Bernoulli parameter p. Therefore, we would obtain  
\begin{equation}
\label{ineq:fail-1}
    P(\lnot\mathbf{a_2}| \mathbf{a_1}) < \alpha,
\end{equation} and similarly:
\begin{equation}
\label{ineq:fail-2}
    P(\mathbf{a_2} | \lnot\mathbf{a_1}) <\alpha.
\end{equation}



\subsection{True Probability Rather Than Observed Events}

Next, we use the law of total probability, a special case of Tower Law, to eliminate the help values $\kappa$ and $\alpha$. We first eliminate $\alpha$.

\subsubsection{Eliminating $\alpha$}

Here, our goal is to determine the probability of $c_\text{PR}(\mathbf{x}, \mathbf{y}; p, \epsilon, h, \kappa)$ over the distribution. In other words, we want to obtain $P(\mathbf{a_1})$. Besides, what we already have is Inequality~(\ref{ineq:fail-1}) and (\ref{ineq:fail-2}).

\begin{lemma}
    We are given a joint distribution of $\mathbf{x}, \mathbf{y}$, where $\mathbf{x}$ has support $\mathbb{X}$ and $\mathbf{y}$ has support $\mathbb{Y}$. For any model $h:\mathbb{X}\to\mathbb{Y}$, its $(p,\epsilon)$-probabilistic robustness has the following lower and upper bounds.
    \begin{equation}
    \label{eq:bounds-alpha}
    \begin{aligned}
        P\left(c_\text{PR}(\mathbf{x}, \mathbf{y}; p, \epsilon, h, \kappa)\right) &> \frac{P\left(c_\text{PRA}(\mathbf{x}, \mathbf{y}; p, \epsilon, h, \kappa, \alpha)\right) - \alpha}{1+\alpha} \\
        P\left(c_\text{PR}(\mathbf{x}, \mathbf{y}; p, \epsilon, h, \kappa)\right) &< \frac{P(c_\text{PRA}(\mathbf{x}, \mathbf{y}; p, \epsilon, h, \kappa, \alpha))}{1-\alpha}
    \end{aligned}
    \end{equation}
\end{lemma}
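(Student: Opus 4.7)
The plan is to translate the lemma into an assertion about two Boolean events and then apply the law of total probability, which the surrounding text already advertises as a special case of the Tower Law. I would let $\mathbf{a_1}$ denote the event $c_\text{PR}(\mathbf{x},\mathbf{y};p,\epsilon,h,\kappa)$ and $\mathbf{a_2}$ denote the event $c_\text{PRA}(\mathbf{x},\mathbf{y};p,\epsilon,h,\kappa,\alpha)$, so that $P(c_\text{PR})=P(\mathbf{a_1})$ and $P(c_\text{PRA})=P(\mathbf{a_2})$. The hypothesis-testing analysis just above the lemma supplies the only per-instance facts the proof needs: $P(\lnot\mathbf{a_2}\mid\mathbf{a_1})<\alpha$ from Inequality~(\ref{ineq:fail-1}) and $P(\mathbf{a_2}\mid\lnot\mathbf{a_1})<\alpha$ from Inequality~(\ref{ineq:fail-2}).

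For the upper bound I would expand $P(\mathbf{a_2})$ by total probability as $P(\mathbf{a_2}\mid\mathbf{a_1})P(\mathbf{a_1})+P(\mathbf{a_2}\mid\lnot\mathbf{a_1})P(\lnot\mathbf{a_1})$, drop the non-negative second summand, and invoke the complemented form $P(\mathbf{a_2}\mid\mathbf{a_1})>1-\alpha$ of Inequality~(\ref{ineq:fail-1}) on the first summand. This yields $P(c_\text{PRA})>(1-\alpha)P(c_\text{PR})$, which rearranges immediately into the stated $P(c_\text{PR})<P(c_\text{PRA})/(1-\alpha)$. For the lower bound I would again expand $P(\mathbf{a_2})$ by total probability, but now bound the first conditional trivially by $1$ and apply Inequality~(\ref{ineq:fail-2}) to the second. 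Writing $P(\lnot\mathbf{a_1})=1-P(\mathbf{a_1})$, collecting $P(\mathbf{a_1})$-terms on one side, and factoring produces an inequality of the form $P(c_\text{PR})\cdot(\cdot)>P(c_\text{PRA})-\alpha$ whose denominator matches the claimed $1+\alpha$ after the algebraic rearrangement.

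The main obstacle is accounting rather than conceptual. The inequalities (\ref{ineq:fail-1}) and (\ref{ineq:fail-2}) are stated per input-label pair $(x,y)$, so I must justify that they lift unchanged to the population-level conditional probabilities $P(\mathbf{a_2}\mid\mathbf{a_1})$ and $P(\mathbf{a_2}\mid\lnot\mathbf{a_1})$ appearing in the total-probability decomposition. This requires either choosing the Binomial-test sample size uniformly across instances so that the one-sided error caps hold simultaneously, or an explicit interchange between the conditional distribution of test randomness and the expectation over the joint distribution of $(\mathbf{x},\mathbf{y})$. A secondary bookkeeping point is tracking strict versus non-strict inequalities through the division by $1\pm\alpha$, which is safe because $0<\alpha<1$ by \cref{tab:definitions} keeps both denominators strictly positive. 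Once these two points are in place, the remainder is one line of algebra per bound.
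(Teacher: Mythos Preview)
Your plan is essentially the paper's own proof: both set $\mathbf{a_1}=c_\text{PR}$, $\mathbf{a_2}=c_\text{PRA}$, invoke total probability for $P(\mathbf{a_2})$, and feed in the two error caps from Inequalities~(\ref{ineq:fail-1}) and~(\ref{ineq:fail-2}). The paper first solves the total-probability identity explicitly for $P(\mathbf{a_1})$ and then substitutes extremal values of the conditionals, whereas you bound the identity directly; the difference is cosmetic.

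One bookkeeping point to flag before you carry out the algebra: your lower-bound route (bound $P(\mathbf{a_2}\mid\mathbf{a_1})\le 1$ and $P(\mathbf{a_2}\mid\lnot\mathbf{a_1})<\alpha$) gives
\[
P(\mathbf{a_2}) < P(\mathbf{a_1}) + \alpha\bigl(1-P(\mathbf{a_1})\bigr) = (1-\alpha)\,P(\mathbf{a_1}) + \alpha,
\]
so the denominator you actually obtain is $1-\alpha$, not the $1+\alpha$ you predicted. This is a \emph{tighter} inequality than the lemma states, and it of course implies the stated one since $1-\alpha<1+\alpha$; the $1+\alpha$ in the paper's version comes from a sign choice in their explicit-solution step. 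So do not be alarmed when your arithmetic does not reproduce the printed denominator---your bound is simply sharper, and the lemma follows a fortiori. Your remarks on lifting the per-instance error caps to population-level conditionals and on the positivity of $1\pm\alpha$ are the right caveats to record.
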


\begin{proof}
    From the law of total probability, it is obtained that 
    \begin{equation}
        P(\mathbf{a_2}) = P(\mathbf{a_2}\mid\mathbf{a_1})P(\mathbf{a_1}) + P(\mathbf{a_2}\mid\lnot \mathbf{a_1})P(\lnot \mathbf{a_1}).
    \end{equation}
    
    If we further write $P(\lnot \mathbf{a_1}) = 1 - P(\mathbf{a_1})$, and $P(\mathbf{a_2}|\lnot \mathbf{a_1}) = 1 - P(\lnot \mathbf{a_2}|\lnot \mathbf{a_1})$, we eventually get
    \begin{equation}
        P(\mathbf{a_1}) = \frac{P(\mathbf{a_2}) - P(\mathbf{a_2}|\lnot \mathbf{a_1})}{1 - P(\lnot \mathbf{a_2}|\mathbf{a_1}) + P(\mathbf{a_2}| \lnot \mathbf{a_1})}.
    \end{equation}
        Now that we know that $0 < P(\mathbf{a_2}|\lnot \mathbf{a_1}), P(\lnot \mathbf{a_2}|\mathbf{a_1}) < \alpha$, we can find the lower and upper limit of $P(\mathbf{a_1})$ as 
    \begin{equation}
            \frac{P(\mathbf{a_2}) - \alpha}{1+\alpha} < P(\mathbf{a_1}) < \frac{P(\mathbf{a_2})}{1-\alpha}.
    \end{equation}
    Recall that the event $\mathbf{a_1}$ is essentially that $c_\text{PR}(\mathbf{x}, \mathbf{y}; p, \epsilon, h, \kappa)$ holds. Also, event $\mathbf{a_2}$ is the observation. This completes the proof.
\end{proof}

Inequality~(\ref{eq:bounds-alpha}) makes sense because the probabilistic robustness (although itself is not measurable) is still predominantly positively related to the observed probabilistic robust accuracy, and a smaller false positive rate ($\alpha$) also makes these two quantities closer. Note that the usually measurable probabilistic robust accuracy is a maximum likelihood estimator of $P(c_\text{PRA}(\mathbf{x}, \mathbf{y}; p, \epsilon, h, \kappa, \alpha))$, as it is computing the sample mean given discrete $\{(x_1, y1), \ldots\}$.

\subsubsection{Eliminating $\kappa$}
Although $\alpha$ has been eliminated, $\kappa$ remains. These extra parameters add confusion and labour to robustness evaluation, e.g., fair comparison is difficult if different $\kappa$ values are used. Next, we present how to eliminate $\kappa$.

In this task, we are given $P(c_\text{PR}(\mathbf{x}, \mathbf{y}; p, \epsilon, h, \kappa))$ or its lower and upper bounds. We can equivalently expand this expression back to the following form using expression~(\ref{eq:pr-expand}).
\begin{equation}
    P\left(P\left(h(\mathbf{x'})\neq y~\left|~\norm{\mathbf{x}-x}_p \le\epsilon\right.\right)\le\kappa\right)
\end{equation}
In the following, we link this form back to Tower robustness, through both Markov's Inequality and Tower Law.

\begin{theorem}
    We are given a joint distribution of $\mathbf{x}, \mathbf{y}$, where $\mathbf{x}$ has support $\mathbb{X}$ and $\mathbf{y}$ has support $\mathbb{Y}$. For any model $h:\mathbb{X}\to\mathbb{Y}$, its $(p,\epsilon)$-Tower robustness has a computable upper bound as
    \begin{equation}
    \label{eq:upper}
        \frac{\kappa P(c_\text{PRA}(\mathbf{x}, \mathbf{y}; p, \epsilon, h, \kappa, \alpha))}{1-\alpha}-\kappa + 1.
    \end{equation}
\end{theorem}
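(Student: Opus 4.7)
The plan is to reduce the claim to a one-line application of Markov's inequality to the inner conditional probability, and then plug in the already-established upper bound on $P(c_\text{PR})$ from Inequality~(\ref{eq:bounds-alpha}).

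First, I would introduce the auxiliary random variable
\begin{equation}
    \mathbf{q} \coloneqq P\!\left(h(\mathbf{x'}) \neq \mathbf{y} ~\left|~ \norm{\mathbf{x'}-\mathbf{x}}_p \le \epsilon\right.\right),
\end{equation}
which is a function of $(\mathbf{x},\mathbf{y})$ and takes values in $[0,1]$. By \cref{thm:tower} together with the complement, the $(p,\epsilon)$-Tower robustness equals $1 - \operatorname{E}[\mathbf{q}]$. Also, by the definition of $c_\text{PR}$ in \cref{tab:definitions}, the event $c_\text{PR}(\mathbf{x},\mathbf{y};p,\epsilon,h,\kappa)$ is exactly $\{\mathbf{q}\le\kappa\}$.

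Next, I would apply Markov's inequality to the non-negative random variable $\mathbf{q}$. Since $\kappa>0$, we have $P(\mathbf{q}>\kappa) \le P(\mathbf{q}\ge\kappa) \le \operatorname{E}[\mathbf{q}]/\kappa$, so
\begin{equation}
    P(c_\text{PR}) \;=\; 1 - P(\mathbf{q}>\kappa) \;\ge\; 1 - \frac{\operatorname{E}[\mathbf{q}]}{\kappa},
\end{equation}
which rearranges to $\operatorname{E}[\mathbf{q}] \ge \kappa\bigl(1 - P(c_\text{PR})\bigr)$. Taking complements yields the intermediate upper bound on Tower robustness
\begin{equation}
\label{eq:interm}
    1 - \operatorname{E}[\mathbf{q}] \;\le\; 1 - \kappa + \kappa\,P(c_\text{PR}(\mathbf{x},\mathbf{y};p,\epsilon,h,\kappa)).
\end{equation}

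Finally, I would substitute the upper bound $P(c_\text{PR}) < P(c_\text{PRA})/(1-\alpha)$ from the preceding lemma (second line of~(\ref{eq:bounds-alpha})) into \cref{eq:interm}. Because $\kappa>0$, the inequality is preserved, giving
\begin{equation}
    1 - \operatorname{E}[\mathbf{q}] \;\le\; 1 - \kappa + \frac{\kappa\,P(c_\text{PRA}(\mathbf{x},\mathbf{y};p,\epsilon,h,\kappa,\alpha))}{1-\alpha},
\end{equation}
which is exactly expression~(\ref{eq:upper}). Computability follows because $P(c_\text{PRA})$ is estimable by a sample mean of the per-sample hypothesis-test outcomes from the previous subsection, and $\kappa,\alpha$ are user-chosen constants.

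The only real subtlety I anticipate is the Markov step: $\mathbf{q}$ lives in $[0,1]$ and is genuinely a random variable in $(\mathbf{x},\mathbf{y})$ (not merely a conditional probability for a fixed outcome), so I would be careful to state that the outer expectation and probability in Markov's inequality are both with respect to the joint distribution of $(\mathbf{x},\mathbf{y})$, matching the outer expectation used in the Tower-Law rewrite of \cref{thm:tower}. Once this is set up cleanly, the rest is a direct substitution.
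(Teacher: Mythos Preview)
Your proposal is correct and follows essentially the same route as the paper: apply Markov's inequality to the non-negative inner conditional probability (your $\mathbf{q}$), rearrange to get $1-\operatorname{E}[\mathbf{q}]\le 1-\kappa+\kappa\,P(c_\text{PR})$, and then substitute the lemma's upper bound $P(c_\text{PR})<P(c_\text{PRA})/(1-\alpha)$. Your explicit naming of $\mathbf{q}$ and your handling of the $\{\mathbf{q}>\kappa\}$ versus $\{\mathbf{q}\ge\kappa\}$ distinction are slightly tidier than the paper's version, but the argument is otherwise identical.
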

\begin{proof}
    According to Markov's Inequality, for a non-negative function $f(b)\ge 0$ and a random variable $\mathbf{b}$,
    \begin{equation}
        P(f(\mathbf{b})\ge \kappa) \le \frac{\operatorname{E}[f(\mathbf{b})]}{\kappa}.
    \end{equation}
    This gives us
    \begin{equation}
        P(f(\mathbf{b})< \kappa) \ge 1-\frac{\operatorname{E}[f(\mathbf{b})]}{\kappa}.
    \end{equation}
    Observe that $P(h(\mathbf{x'})\neq y\mid\norm{\mathbf{x}-x}_p \le\epsilon)$ is a non-negative function of $\mathbf{x}$. Thus, substituting it in we get the following form.
    \begin{equation}
        P\left(P_{\norm{\mathbf{x}-x}_p \le\epsilon}\left(h(\mathbf{x'})\neq y\right)\le \kappa\right)\ge 1 - \frac{\operatorname{E}[P_{\norm{\mathbf{x}-x}_p \le\epsilon}\left(h(\mathbf{x'})\neq y\right)]}{\kappa}
    \end{equation}
    Shifting the sides, we get
    \begin{equation}
    \begin{aligned}
        \operatorname{E}&\left[P_{\norm{\mathbf{x}-x}_p \le\epsilon}\left(h(\mathbf{x'})\neq y\right)\right]\\
        &\ge \kappa\left(1 - P\left(P_{\norm{\mathbf{x}-x}_p \le\epsilon}\left(h(\mathbf{x'})\neq y\right)\le \kappa\right)\right)
    \end{aligned}
    \end{equation}
    Multiplying -1 and then adding 1 to both sides, we get
    \begin{equation}
    \begin{aligned}
        \operatorname{E}&\left[P_{\norm{\mathbf{x}-x}_p \le\epsilon}\left(h(\mathbf{x'})= y\right)\right]\\
        &\le 1- \kappa\left(1 - P\left(P_{\norm{\mathbf{x}-x}_p \le\epsilon}\left(h(\mathbf{x'})\neq y\right)\le \kappa\right)\right)\\
        &\le 1- \kappa + \kappa P\left(P_{\norm{\mathbf{x}-x}_p \le\epsilon}\left(h(\mathbf{x'})\neq y\right)\le \kappa\right)
    \end{aligned}
    \end{equation}
    We have got the lower and upper limits of the third term on the right-hand side of this inequality. Substituting that in, we get
    \begin{equation}
        \operatorname{E}\left[P_{\norm{\mathbf{x}-x}_p \le\epsilon}\left(h(\mathbf{x'})= y\right)\right] < \frac{\kappa P(c_\text{PRA}(\mathbf{x}, \mathbf{y}; p, \epsilon, h, \kappa, \alpha))}{1-\alpha}-\kappa + 1.
    \end{equation}
    This provides an upper bound on Tower robustness of the given model over the given distribution.
\end{proof}

\begin{theorem}
    We are given a joint distribution of $\mathbf{x}, \mathbf{y}$, where $\mathbf{x}$ has support $\mathbb{X}$ and $\mathbf{y}$ has support $\mathbb{Y}$. For any model $h:\mathbb{X}\to\mathbb{Y}$, its $(p,\epsilon)$-Tower robustness has a computable lower bound as
    \begin{equation}
        (1-\kappa) \frac{P\left(c_\text{PRA}(\mathbf{x}, \mathbf{y}; p, \epsilon, h, \kappa, \alpha)\right) - \alpha}{1+\alpha}.
    \end{equation}
\end{theorem}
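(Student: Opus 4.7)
The plan is to mirror the proof of the preceding upper bound, but to replace the use of Markov's inequality with a direct pointwise lower bound. For any realisation $(x,y)$ on which the event $c_\text{PR}(x,y;p,\epsilon,h,\kappa)$ holds, the definition of $c_\text{PR}$ immediately yields $P(h(\mathbf{x'})=y\mid \norm{\mathbf{x'}-x}_p\le\epsilon)\ge 1-\kappa$; on the complementary event I can only assert the trivial bound that this conditional probability is non-negative. These two cases are all I will need.

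I would first apply the law of total expectation to the Tower-law form of Tower robustness from \cref{thm:tower}, splitting the outer expectation according to whether $c_\text{PR}(\mathbf{x},\mathbf{y};p,\epsilon,h,\kappa)$ holds. The two pointwise bounds above then imply that Tower robustness is at least $(1-\kappa)\,P(c_\text{PR}(\mathbf{x},\mathbf{y};p,\epsilon,h,\kappa))$. I would then substitute the already-established lower bound from Inequality~(\ref{eq:bounds-alpha}), namely $P(c_\text{PR}) > (P(c_\text{PRA})-\alpha)/(1+\alpha)$, and observe that $0<\kappa<1/2$ implies $1-\kappa>0$, so multiplying through preserves the inequality direction and produces exactly the claimed expression.

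The main obstacle is not really technical; the derivation is a short arithmetic chain of facts already established in the excerpt. The subtle point I would flag is that the resulting bound can be vacuous (even negative) when $P(c_\text{PRA})<\alpha$, in which case the non-negativity of Tower robustness supplies the trivial floor of $0$ and the statement still holds. Practically, $P(c_\text{PRA})$ is exactly what the per-point hypothesis test from the previous subsection estimates by a sample mean, so this lower bound becomes genuinely computable once that estimate is plugged in.
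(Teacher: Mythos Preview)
Your proposal is correct and follows essentially the same route as the paper: both split the Tower-law expectation according to the event $c_\text{PR}$, use the pointwise bound $1-\kappa$ on that event and the trivial bound on its complement to obtain $\operatorname{E}[P(h(\mathbf{x'})=\mathbf{y})]\ge(1-\kappa)P(c_\text{PR})$, and then substitute the lower bound from Inequality~(\ref{eq:bounds-alpha}). The only cosmetic difference is that the paper carries out the arithmetic on the misprediction probability $P(h(\mathbf{x'})\neq y)$ (with $\kappa^*=1$ playing the role of your trivial bound) and converts to correctness at the end, whereas you work with $P(h(\mathbf{x'})=y)$ directly.
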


\begin{proof}
    We start from the Tower Law, and we can get the following inequality
    \begin{equation}
    \begin{aligned}
        \operatorname{E}&\left[P_{\norm{\mathbf{x}-x}_p \le\epsilon}\left(h(\mathbf{x'})\neq y\right)\right] \le\\
        &\kappa P\left(P_{\norm{\mathbf{x}-x}_p \le\epsilon}\left(h(\mathbf{x'})\neq y\right)\le \kappa\right) +\\
        &\kappa^* P\left(P_{\norm{\mathbf{x}-x}_p \le\epsilon}\left(h(\mathbf{x'})\neq y\right)> \kappa\right)
    \end{aligned}
    \end{equation}
    where $\kappa^*$ is a known upper bound that $\kappa^* \ge P_{\norm{\mathbf{x}-x}_p \le\epsilon}\left(h(\mathbf{x'})\neq y\right)$ on the function's output. A naive $\kappa^*$ value is 1, which leads to 
    \begin{equation}
    \begin{aligned}
        \operatorname{E}&\left[P_{\norm{\mathbf{x}-x}_p \le\epsilon}\left(h(\mathbf{x'})\neq y\right)\right] \le\\
        &(\kappa-1) P\left(P_{\norm{\mathbf{x}-x}_p \le\epsilon}\left(h(\mathbf{x'})\neq y\right)\le \kappa\right) + 1
    \end{aligned}
    \end{equation}
    Multiplying -1 and then adding 1 to both sides, we get
    \begin{equation}
    \begin{aligned}
        \operatorname{E}&\left[P_{\norm{\mathbf{x}-x}_p \le\epsilon}\left(h(\mathbf{x'})= y\right)\right]\\
        &\ge (1-\kappa) P\left(P_{\norm{\mathbf{x}-x}_p \le\epsilon}\left(h(\mathbf{x'})\neq y\right)\le \kappa\right)
    \end{aligned}
    \end{equation}
    Substituting in a lower bound of the right-hand side, we get
    \begin{equation}
    \begin{aligned}
        \operatorname{E}&\left[P_{\norm{\mathbf{x}-x}_p \le\epsilon}\left(h(\mathbf{x'})= y\right)\right]\\
        &> (1-\kappa) \frac{P\left(c_\text{PRA}(\mathbf{x}, \mathbf{y}; p, \epsilon, h, \kappa, \alpha)\right) - \alpha}{1+\alpha}.
    \end{aligned}
    \end{equation}
    This completes the proof.
\end{proof}

Till here, we have successfully eliminated the helper parameters $\kappa$ and $\alpha$. In practice, we can freely choose convenient values of them and compare the model robustness of, say, $h_1$ with another model $h_2$ that is evaluated with different helper parameters. Essentially, we are comparing the lower bound, or the robustness guarantee of each model. And, when one intends to claim a better model robustness, it is reasonable to push the lower bound tighter by using smaller $\alpha$ and $\kappa$. This is reasonable, compared to the existing practice that choosing higher $\kappa$ and $\alpha$ inflates the robustness score~\cite{zhang2023proa}.

Another advantage of using Tower robustness, or its lower bound guarantee, to evaluate robustness, is that in this unified framework, we can compare the deterministic robustness and probabilistic robustness, i.e., we find a fair comparison method. Furthermore, the proposed lower bound can be used together with deterministic robustness verification, i.e., for those cases that are not able to be deterministically verified, we can compute their probabilistic robust accuracy and combine them. Such a combination is not merely a score adding, but is derivable from the governing expression of Tower robustness.

\section{Experiment}
\begin{figure*}[t]
\centering
\scriptsize

\includegraphics[width=\linewidth]{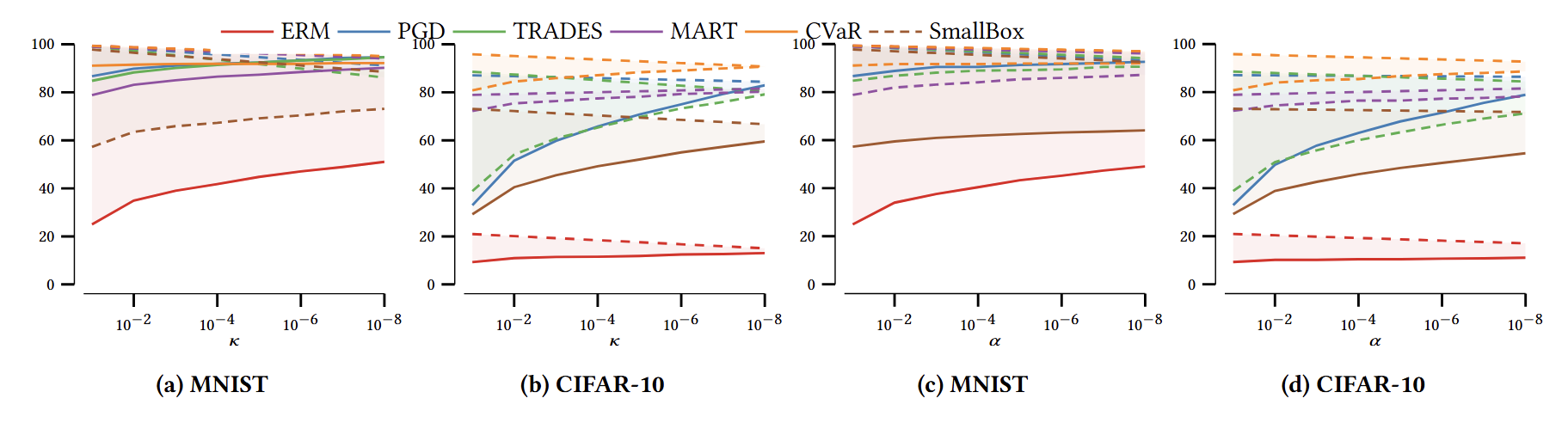}

\caption{Tower robustness guarantees. Each solid line shows the lower bound, and the dashed line shows the upper bound.}
\label{fig:tr}
\end{figure*}

\paragraph{Setup}
We compute the Tower robustness upper and lower bounds for existing neural networks. There are altogether six neural models, trained from ERM~\cite{vapnik1999nature}, PGD adversarial training~\cite{madry2017towards}, TRADES~\cite{zhang2019theoretically}, MART~\cite{wang2019improving}, CVaR~\cite{robey2022probabilistically}, and small-box certified training~\cite{muller2022certified}.

We evaluate these models on the commonly used MNIST~\cite{lecunmnist} and CIFAR-10 tasks. Typically, MNIST classification is tested within $(\infty,1/10)$-neighbourhood and CIFAR-10 $(\infty,8/255)$-neighbourhood, which is also our default setting. For each dataset, we evaluate on the 10,000 images in the testing set.

\paragraph{Result}
We present our experimental results in \cref{fig:tr}. Since the validity of Tower robustness is formally shown in previous discussions, our experiments are primarily used to help understand and provide visual ideas. From \cref{fig:tr}, we may first observe that CVaR so far obtains the highest lower bound of Tower robustness on CIFAR-10, over 90.6\%. The second highest is from PGD adversarial training, but only slightly above 80\%. On MNIST, except for ERM, which does not optimise robustness and Small-box, which has concentrated primarily on deterministic robustness, all other models achieve a pretty high guarantee. These results aligned with our intuition and others' results~\cite{li2023sok}. Again, in a strict sense, this alignment would not lead us to any implication, but simply we are more or less not off track.

What brings us more benefits is to observe the trend when the helper parameters are decreasing. As mentioned in previous discussions, if a model provider would like to get a better certificate of the model's robustness, a fair approach is to squeeze these helper parameters. This lifts the lower bound, as observed in \cref{fig:tr}, but does not invalidate it. Intuitively, this could be considered as different verification methods in the deterministic robustness context, where each model does not necessarily obtain a certificate in the same approach. In our experiment, we use a default $\kappa = 1/10, \alpha = 1/10$. When reducing either $\kappa$ or $\alpha$, the lower bound can grow, which overcomes the inflation problem prevalent in probabilistic robustness evaluations. Note that reducing $\kappa$ turns out to be more effective than reducing $\alpha$. Intuitively, $\kappa$ is more closely related to the proportion of adversarial examples than $\alpha$ is.

Intriguingly, we can apply the probabilistic robustness verification techniques to the test cases not deterministically verified. For example, Small-Box is a (deterministic) certified training method, which is usually evaluated only for deterministic robustness. While one could indeed subject this model to probabilistic robustness verification, there have been no theoretical guarantees provided for combining both. Our method is the first that put these two verification results together and provide an interpretable quantity. Yet, because the Small-Box model has primarily been tuned towards deterministic verification, its Tower robustness lower bound remains one of the lowest on the list, even if we simultaneously enable probabilistic robustness verification.

As for the upper bound of Tower robustness, it generally would not be used as a guarantee. But it can be used to confirm the defects of a given model. Its monotonicity is not obvious in expression~(\ref{eq:upper}), but we have observed it likely decreases when either $\kappa$ or $\alpha$ grows.

\section{Related Work}

\paragraph{Empirical-based Approaches}
A common and intuitive method for assessing model robustness is the test error rate~\cite{carlini2017adversarial}, valued for its speed and versatility in evaluating both $L^p$ norm and transformation perturbations (e.g., rotation, illumination). Adversarial attacks exemplify this approach, identifying vulnerable samples within defined vicinities~\cite{carlini2017adversarial,kurakin2016adversarial}. However, this method lacks guarantees, as previously validated robustness may be undermined by stronger attacks, necessitating re-evaluation across models. To mitigate this,~\cite{hendrycks2021natural} proposes using a standardized adversarial test set, enabling efficient and consistent evaluation across models. Although this technique doesn't strictly align with any formal robustness definition and risks overfitting, it offers a practical, rapid method for accurately ranking models.

\paragraph{Reachability-based algorithms} take an input and a predefined perturbation to assess whether a given model meets specific safety criteria, such as robustness and boundary adherence. This evaluation of robustness can be achieved by addressing output range analysis problems through computational methods including interval analysis, linear programming, and optimization~\cite{virmaux2018lipschitz,lee2020lipschitz,li2019analyzing,yang2021reachability}. Layerwise analysis is a prevalent approach in this context. The K-ReLU technique~\cite{singh2019beyond}, which utilizes joint relaxation, effectively captures dependencies between inputs to different ReLUs within a layer, thus overcoming the convex barrier imposed by single neuron triangle relaxation and its approximations. Another notable method, CLEVER~\cite{weng2018evaluating}, transforms the robustness analysis into a local Lipschitz constant estimation problem and employs the Extreme Value Theory for efficient evaluation. However, it is important to note that reachability analysis techniques often require Lipschitz continuity over outputs for the target networks, which may limit their applicability in some instances.

\paragraph{Constraint-based Verification}  aims to ensure that a given model adheres to a predefined set of robustness constraints. This process entails defining the constraints and then analyzing the model using techniques such as model checking, testing, or runtime monitoring~\cite{katz2017reluplex,amir2021smt}. Marabou, for instance, translates queries into constraint satisfaction problems using Satisfiability Modulo Theories (SMT)\cite{barrett2018satisfiability}. It is compatible with various network activation functions and topologies, and it carries out high-level reasoning on the network~\cite{katz2019marabou}. Another SMT-based approach, Reluplex, employs the simplex method while extending it to accommodate the Rectified Linear Unit (ReLU) activation function~\cite{katz2017reluplex}. Although constraint-based verification can offer formal guarantees that a model fulfils certain constraints or requirements, challenges such as computational complexity and pessimistic results have hindered its widespread application~\cite{zhang2023proa}. Consequently, researchers have proposed probabilistic robustness to address these limitations.

\paragraph{Probabilistic-Based Approach} Diverging from the deterministic verification methods previously examined, probabilistic approaches consider the balance between performance and sample complexity in both worst-case and average-case learning situations~\cite{lecuyer2019certified,cohen2019certified,robey2022probabilistically}. \citet{cohen2019certified} employ randomized smoothing as a reachability method, calculating a lower bound on robustness by adding noise to input data and observing output behaviour. This lower bound can then be determined by the proportion of perturbed inputs yielding robust outputs. Additionally, \citet{salman2019provably} develops an adapted attack for smoothed classifiers and utilises adversarial training to enhance the provable robustness of these classifiers. However, these statistical methods primarily concentrate on pixel-level additive perturbations and often require assumptions about target neural networks or input distributions, limiting their broad applicability as they mostly function with $L^2$ norms. Moreover, \citet{zhang2023proa} introduces PRoA, which relies on adaptive concentration. Although they employ an approximation method for distribution, it may lead to soundness issues in the future. In contrast, the proposed exact binomial test method retains the benefits of probabilistic approaches while using an exact solution to circumvent soundness problems. Similar to PRoA, our method can address black-box deep learning models without assumptions and is not restricted by transformation attacks (unbounded by $L^p$ norms).

Therefore, there is a growing need for certifiably robust approaches, including both certified robustness verification~\cite{li2023sok} and certified training~\cite{muller2022certified,shi2021fast}.

\section{Conclusion}

In this study, we introduce an innovative approach aimed at enhancing the observation process of probabilistic robustness in neural networks against adversarial examples. Our method comprises two key components: a unified framework that defines robustness in a global perspective that is built on tower law, and a computable lower bound to guarantee adversarial robustness. Theoretical analysis reveals that existing robustness metrics often fundamentally disagree with each other, i.e., each metric captures part of the robustness property and sticks to that. However, it is crucial to obtain an interpretable robustness measurement that fairly compares different models or systems.

Throughout this work, we adopt a global perspective to unify the deterministic and probabilistic robustness, i.e., we focus on the behaviour of an upcoming unseen sample. This property can be naturally lower bounded by the deterministic robust accuracy, resolving the incompatibility between a strict guarantee and practical implication. We also show the convolutional interpretation of Tower robustness, highlighting its consistency with correctness.

Moreover, we provide computable lower and upper bounds of the proposed Tower robustness. By eliminating the excessive degree of freedom in probabilistic robustness, we solve the overestimation problem of probabilistic robustness metrics, resulting in more intuitive and reliable outcomes when certifying robustness on the test set. Furthermore, we can combine the proposed bounds with deterministic robustness, and this still provides theoretical guarantees.

In summary, our approach presents a significant step forward in improving the formal evaluation of the general sense of robustness in neural networks against adversarial examples. The combination of exact calculations, degree-of-freedom reduction, and standardised thresholds contributes to a more reliable and intuitive certification process.

In future, we will also explore the use of differential privacy for safeguarding against adversarial attacks~\cite{Mu-CIKM25b}.

{
\vspace{3mm}
\footnotesize
\noindent{\textbf{Acknowledgements}}.
This research is supported in part by the Ministry of Education, Singapore, under its Academic Research Fund Tier 2 (Award No. MOE-T2EP20123-0015). Any opinions, findings and conclusions, or recommendations expressed in this material are those of the authors and do not reflect the views of the Ministry of Education, Singapore.

\section*{GenAI Usage Disclosure}
Generative AI tools were used for minor language editing, such as grammar/spelling checks and minor changes, on original author-written text.

\bibliography{favourite,anthology,custom}
\end{document}